\documentclass[letterpaper,10pt,conference]{ieeeconf}

\IEEEoverridecommandlockouts
\usepackage{amsmath,amssymb,amsfonts}
\usepackage{algorithmic}
\usepackage{graphicx}
\usepackage{textcomp}
\usepackage{xcolor}
\usepackage{booktabs}
\usepackage{cite}
\usepackage[draft]{hyperref}
\usepackage{bm}
\usepackage{microtype}
\usepackage{tikz}
\usepackage{needspace}

\definecolor{gbpanel}{RGB}{254,240,227}
\definecolor{gbink}{RGB}{9,20,49}

\hypersetup{bookmarks=false}
\makeatletter\let\labelindent\relax\makeatother
\usepackage{enumitem}

\newtheorem{proposition}{Proposition}
\newtheorem{definition}{Definition}

\newenvironment{proof}{\noindent\textit{Proof:}}{\hfill$\blacksquare$\vspace{1.5mm}}

\title{\LARGE \bf The Robot Is Not Its Description: GaugeBench for Representation Robustness in Morphology-Aware Policies \vspace{-1.5mm}}

\author{Rahath Malladi$^{1}$, Arshia Sangwan$^{2}$, Rajesh K. Gupta$^{1}$, Tauhidur Rahman$^{1}$
\thanks{$^{1}$University of California San Diego, $^{2}$New York University.}
\thanks{Corresponding author: Rahath Malladi, \texttt{ramalladi@ucsd.edu}.}}

\begin{document}

\IEEEaftertitletext{\vspace{-1.0\baselineskip}}
\maketitle
\enlargethispage{\baselineskip}
\thispagestyle{empty}
\pagestyle{empty}

\begin{abstract}
A robot description does more than specify a physical mechanism: it also encodes arbitrary conventions, such as joint-axis direction, joint-angle zero, and the order and names of links and joints. Morphology-aware policies consume interfaces built from these descriptions, yet cross-embodiment evaluation typically changes the robot while keeping those conventions fixed. This leaves a simple question unanswered: does behavior survive when the robot stays fixed but its description changes? GaugeBench isolates this case by rewriting a fixed mechanism under physically equivalent conventions, verifying that its physics and policy interface are preserved, and then evaluating the same policy weights. The result is stark: three MetaMorph policies score 4030.6 on 80 familiar robots, but only 51.6 when those same robots are equivalently re-described, while 98 genuinely held-out robots score 1489.6. A new description can therefore be more damaging than a new robot. Tracing the failure reveals that axis reversal alone reproduces the collapse, joint-angle zero changes are nearly harmless, and reordering lies between them; moreover, changing joint-state and torque coordinates alone is sufficient to cause the failure, while changing description-derived features alone is not. The same phenomenon appears in ModuMorph and an unrelated PyBullet framework. Yet it is not irreversible: exact two-description transport restores the original controller, and training across equivalent axis conventions raises retained return under axis reversal from 3.6\% to 80.6\%. Together, these results separate mechanism robustness from representation robustness and show that cross-embodiment evaluation should test both.

\end{abstract}

\section{Introduction}

One physical mechanism admits many numerical descriptions. Two files can agree on every link, mass, inertia, and joint placement and still disagree on the direction an engineer called positive for a hinge, on the pose that hinge calls zero, and on the order and names of parts. Classical rigid-body equations transform covariantly under a consistent change of generalized coordinates, so a textbook routine returns the same world-frame motion from either file~\cite{featherstone2008rigid, murray2017mathematical}. A robot is a mechanism. A description is one way of writing it down.

Cross-embodiment robot learning increasingly conditions a single policy on a structured robot description, and tests it by changing the robot. Robustness here has two distinct axes, and that practice measures only one of them. \emph{Mechanism robustness} asks whether a policy transfers from the machines it trained on to a new one, $M_{\mathrm{train}} \rightarrow M_{\mathrm{new}}$. \emph{Representation robustness} asks whether it preserves behavior when one fixed mechanism $M$ is expressed by two physically equivalent descriptions $D(M)$ and $D'(M)$, with the machine itself untouched. Existing cross-robot benchmarks probe the first while keeping one description convention per robot~\cite{parakh2025anybody}. GaugeBench isolates the second (Fig.~\ref{fig:overview}).

Nothing here assumes that an unconstrained neural policy should be invariant to a coordinate convention it never saw. In general, it has no reason to be. The question is what that costs an evaluation. Cross-embodiment results are read as evidence that a policy has learned something about robots, and we ask how much of that evidence rests on a convention the benchmark never varies. Answering it requires a mechanism that stays fixed while its description changes.

The answer is not graceful degradation. Across three MetaMorph policies, mean return falls from 4030.6 on 80 familiar robots to 51.6 on those same robots under certified equivalent descriptions, while 98 physically held-out robots from the same procedural generator return 1489.6. Across the three baseline policy families we evaluate, every policy loses more from re-describing a robot it was trained on than from meeting the held-out population. A new robot can be easier than a new description of the same robot. We call this comparison the \textbf{Same-Robot Paradox}.

Changing the conventions one at a time shows that a single one is enough. Reversing joint-axis directions retains 3.6 percent of the original return against 97.7 for joint-angle zeros and 35.8 for reordering, and driving axis reversal's two channels separately, joint-state and torque coordinates alone retain 2.5 percent while description-derived features alone retain 37.6 percent. The effect is not specific to one implementation, reproducing on ModuMorph and, under axis reversal alone, on released policies from an unrelated line of work on modular robot control running in PyBullet. Exact interface transport then recovers the original controller with unchanged weights, and training over randomly drawn equivalent axis conventions raises retained return under axis reversal from 3.6 to 80.6 percent.

\begin{figure*}[t]
\centering
\includegraphics[width=0.90\textwidth]{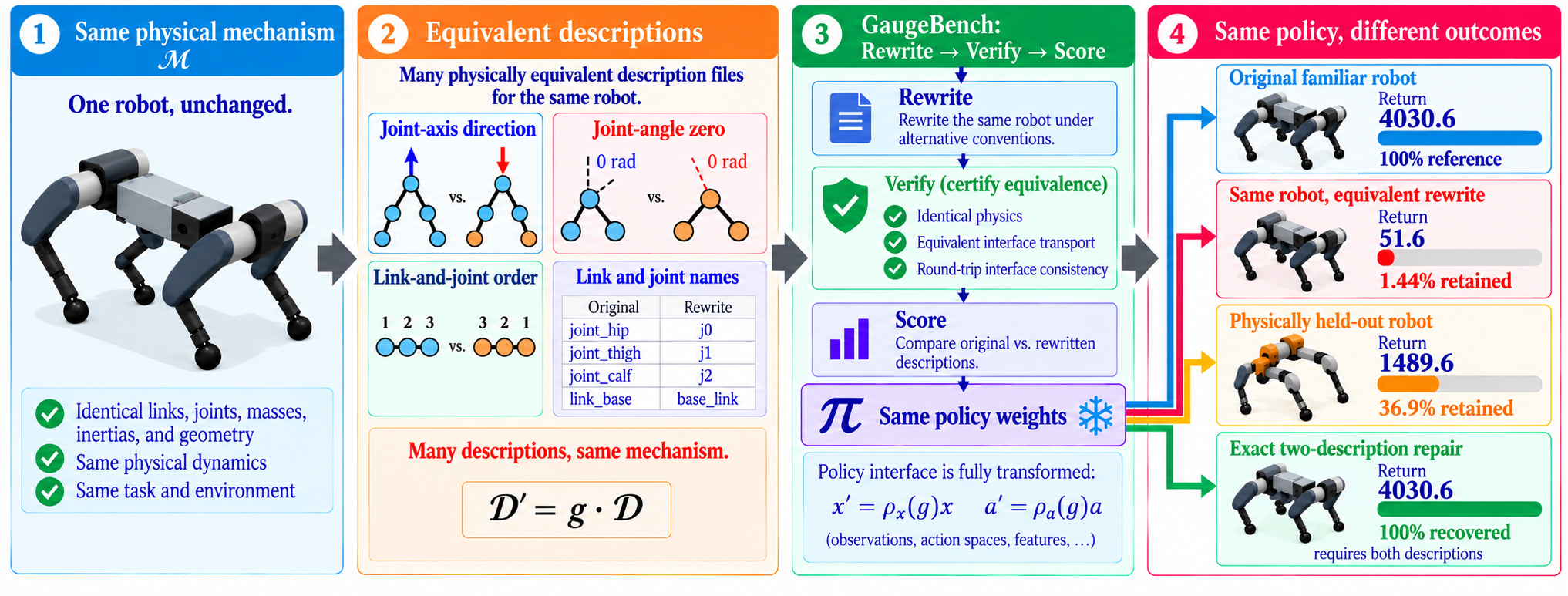}
\caption{\textbf{One mechanism, many descriptions, and the evaluation axis that exposes them.} A description records a mechanism together with arbitrary conventions: joint-axis directions $\sigma_j$, joint-angle zeros $q_{0,j}$, link-and-joint order $p$, and part names. GaugeBench rewrites a robot under a different choice of them, certifies physical and interface equivalence, and only then scores the \emph{same} weights. MetaMorph keeps roughly a third of its return on a physically held-out robot but almost nothing on an equivalent rewrite of a robot it trained on.}
\label{fig:overview}
\end{figure*}

\medskip\noindent\textbf{Contributions:} First, \textbf{an evaluation axis.} GaugeBench separates mechanism robustness from representation robustness, scoring fixed morphology-aware policies on descriptions whose physics and policy interface are certified equivalent before any episode is run. Second, \textbf{a same-robot representation failure, and which channel reproduces it.} An equivalent description of a familiar robot costs these policies more than a physically held-out robot, and controlled interventions show that changing joint-state and action coordinates alone reproduces the collapse while changing description-derived features alone does not. Third, \textbf{generality and response.} The failure reproduces on ModuMorph and on an independent PyBullet system. Exact transport recovers the original controller with unchanged weights, and axis-convention randomization raises retained return under axis reversal from 3.6 to 80.6 percent for the one convention family it samples.

\section{Related Work}

\subsection{Morphology-aware policy learning}
MetaMorph~\cite{gupta2022metamorph} tokenizes a kinematic graph for a Transformer, ModuMorph~\cite{xiong2023modumorph} modulates a shared backbone with per-robot context, and Body Transformer~\cite{sferrazza2024bodytransformer} masks attention by embodiment graph, each treating the description as a stand-in for the robot under one convention per machine. Current work extends the interface to morphology tokens~\cite{suzuki2026embedding} and morphology-conditioned world models~\cite{danesh2026morphology}, while cross-embodiment benchmarks evaluate transfer to novel embodiments~\cite{parakh2025anybody}. Policies conditioned on images and language~\cite{open_x_embodiment2023, pi0_2024} fix proprioception and action ordering in a comparable way, though we do not transform those interfaces and make no claim about them. We evaluate released systems whose interfaces can be reproduced and transformed exactly, which is what makes a causal test possible.

\subsection{Symmetry, equivariance, and canonicalization}
In classical rigid-body mechanics, these conventions are bookkeeping, and physical predictions do not depend on which one is chosen~\cite{featherstone2008rigid, murray2017mathematical}. Whether learned policies that consume descriptions inherit that independence is an empirical question. Four changes are easy to conflate. Cross-robot benchmarks vary the mechanism. Geometric deep learning handles a rigid motion of a whole scene under global $SE(3)$ symmetry~\cite{bronstein2021geometric}, with gauge-equivariant networks addressing tangent-frame choice on manifolds~\cite{cohen2019gauge}. Morphological symmetry exploits a symmetry a mechanism genuinely has, such as left--right limb pairs~\cite{wei2025beyond}. Ours is the fourth: a rewrite of one fixed mechanism's numbers, which leaves the mechanism and its asymmetries where they were and so exists for every robot, symmetric or not.

Repairing such a rewrite is standard machinery, and we claim no novelty for it. Conjugating a fixed policy by an exact interface transport is the lifted policy of an MDP isomorphism~\cite{vanderpol2020mdp}. Canonicalization maps an input to a canonical representative, applies a base model, and transports the output back, whether by frame averaging~\cite{puny2022frame}, a learned canonicalizer~\cite{kaba2023equivariance}, or an $SE(2)$ wrapper around a trained manipulation policy~\cite{deng2025eqbot}. Proposition~\ref{thm:repair} only specializes that construction to the full policy-visible input. What is new here is the evaluation axis: certifying physically equivalent descriptions before scoring, measuring what re-description costs against what a new mechanism costs, decomposing that cost by convention, and replicating it across independent systems.

\subsection{Closest recent work}
Tai~\cite{tai2026sameweights} shows that one vision--language--action checkpoint becomes a different controller when its action-scaling metadata changes, so fixed weights can depend on non-physical metadata. That change also alters the physical action produced, so the controller itself is not preserved. We instead certify physical equivalence before evaluation and put the loss in context against held-out robots. Yang and Hatton~\cite{yang2026coordinate} make inverse-dynamics identification independent of the generalized-coordinate chart. Their object is a model-fitting objective scored by prediction accuracy, whereas ours is a closed-loop policy scored by task behavior.

\section{Robot Description Equivalence}

\subsection{Descriptions and physical equivalence}
Let a mechanism $M$ be described by $D = (\mathcal{V}, \mathcal{E}, \{T_i\},$ $\{\hat{\bm{s}}_j\}, \{m_i\}, \{\bm{c}_i\}, \{I_i\}, \nu)$, with links $\mathcal{V}$, joints $\mathcal{E}$, parent-to-child frames $T_i \in SE(3)$, unit screw axes $\hat{\bm{s}}_j$, masses, centre-of-mass offsets, inertias, and an identifier assignment $\nu$ naming links and joints.

The evaluated policies do not consume $D$ directly. They consume a \emph{policy interface} built from it: per-part feature vectors in the order the file lists them, graph edges, padding masks, and per-joint features, alongside the robot state and task observation. Write $x$ for everything the policy sees and $\bm{a}$ for the action. The distinction carries the paper's causal result, because a description change can reach the interface through description-derived features, through joint-state and action coordinates, or both.

\begin{definition}[Physical equivalence]
A transformation $g$ carrying $D$ to $D' = g\cdot D$ is physically equivalent if it acts by an invertible change of generalized coordinates $A_g$, extended to joint velocities and torques, under which both descriptions realize the same mechanism in exact arithmetic. Chart-independent quantities agree directly: world-frame link and contact geometry, tip velocity, kinetic energy, and the trajectory a matched torque sequence produces. Chart-dependent ones agree only after transport by $A_g$, which is how spatial Jacobians, the mass matrix, inverse dynamics, and forward accelerations are compared, since each transforms covariantly.
\label{def:equiv}
\end{definition}

Physical equivalence is a property of the descriptions. What a policy meets is the pair of invertible conversions $\rho_x(g)$ and $\rho_a(g)$ that $A_g$ induces on the complete policy interface and on actions.

Definition~\ref{def:equiv} is a statement about exact arithmetic. What can be checked is an implementation of it. GaugeBench certifies each rewrite numerically over a 100-step rollout from the environment's reset pose under actuator commands drawn uniformly from $\pm 0.1$ per step. Each quantity in Definition~\ref{def:equiv} is checked separately rather than pooled, in its own units and after transport where it depends on the chart, and its largest componentwise absolute difference must stay within an absolute $10^{-7}$. Certification runs in the same simulator configuration used to score the policy, which is part of the claim, because a rewrite can preserve physics under one contact setting and not another, as Sec.~\ref{sec:independent} shows.

Exact mathematics and floating-point code differ slightly. Adding an offset to a joint angle and subtracting it again returns the original angle on paper, and in float64 returns it bitwise in 86 percent of cases from this family and within one unit in the last place on the rest. Conversion back to the original interface is therefore required to agree within a tolerance fixed from the available precision rather than bitwise.

Four description choices are varied. \textbf{Joint-axis direction} $\sigma_j \in \{\pm 1\}$ determines which direction around a joint is positive. \textbf{Joint-angle zero} $q_{0,j}$ is the offset that fixes which physical pose the rewritten file calls $q'_j = 0$. \textbf{Link-and-joint order} is the order in which parts are written in the file and passed to the policy, a permutation $p$. \textbf{Link and joint names} may be replaced bijectively, with every reference updated. Combined changes are verified together, because small numerical errors need not remain small after composition.

\subsection{Converting policy inputs and actions}
For changes to axis direction, angle zero, and link-and-joint order, joint coordinates and torques convert as
\begin{equation}
q'_j = \sigma_j\big(q_{p^{-1}(j)} - q_{0,j}\big), \quad
\dot{q}'_j = \sigma_j \dot{q}_{p^{-1}(j)}, \quad
\tau'_j = \sigma_j \tau_{p^{-1}(j)},
\label{eq:transport}
\end{equation}
so a change of zero shifts the angle but not its rate. Joint limits travel with their joints, mirrored by a sign flip and shifted by a change of zero, and the features computed from the description follow suit: part vectors are reordered, direction-dependent quantities change sign, and graph edges and padding masks take the new order. Rewriting a description without also converting observations and actions would change the physical controller rather than only its representation, so we verify Eq.~\eqref{eq:transport} for every case before running the policy.

\subsection{Exact repair}
Write $\rho_x(g)$ for conversion of the full policy input, $\rho_a(g)$ for conversion of actions, and $x' = \rho_x(g)\,x$.

\begin{proposition}[Full-input description repair]
Let $\pi$ be a fixed policy evaluated deterministically, and let $g$ admit invertible input and action conversions in exact arithmetic. The rewritten interface presents $x'_t = \rho_x(g)\,x_t$ from reset onward, converts each action back with $\rho_a(g)^{-1}$ before execution in the original coordinate system, and evaluates reward and termination on the original physical trajectory. For a correspondence $\hat{g}$ recovered from the two descriptions, define
\begin{equation}
\pi_{\mathrm{repair}}(x') \;=\; \rho_a(\hat{g})\;\pi\big(\rho_x(\hat{g})^{-1} x'\big).
\label{eq:repair}
\end{equation}
If $\hat{g} = g$, the closed loop formed by $\pi_{\mathrm{repair}}$ on $D'$ realizes the same physical trajectory as $\pi$ on $D$.
\label{thm:repair}
\end{proposition}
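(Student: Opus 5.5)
The plan is an induction on the time step $t$, showing that the two closed loops share the same physical state at every step. The invariant is: the physical state $s_t$ produced by $\pi_{\mathrm{repair}}$ on $D'$ equals the state $s_t$ produced by $\pi$ on $D$. Under the rewritten interface this means $x'_t = \rho_x(g)\,x_t$, where $x_t$ is the observation the original loop receives at time $t$. Here $s_t$ is the physical state written in the original coordinate system, which is the system in which execution, reward, and termination are evaluated by the statement's setup.

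The base case is immediate. Both loops start from the same reset. The hypothesis that the rewritten interface presents $x'_0 = \rho_x(g)\,x_0$ from reset onward gives the invariant at $t=0$.

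The inductive step is a short algebraic cancellation, valid because $\hat g = g$ and the conversions are invertible in exact arithmetic. The repaired policy outputs
\[
a'_t = \rho_a(g)\,\pi\big(\rho_x(g)^{-1}\rho_x(g)\,x_t\big) = \rho_a(g)\,\pi(x_t).
\]
The interface converts this action back with $\rho_a(g)^{-1}$ before execution, so the executed torque is $\rho_a(g)^{-1}\rho_a(g)\,\pi(x_t) = \pi(x_t)$. This is exactly the torque the original loop executes.

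Two facts then carry the induction forward:
\begin{itemize}
\item $\pi$ is deterministic, so $\pi(x_t)$ is a single well-defined action in both loops.
\item The simulator is a function of (state, action) in the original coordinates.
\end{itemize}
Together these give the same $s_{t+1}$ in both loops, hence the same original observation $x_{t+1}$. By the interface hypothesis, the rewritten observation is then $x'_{t+1} = \rho_x(g)\,x_{t+1}$. Reward and termination are functions of the original physical trajectory, so they coincide as well, and the two episodes have the same length.

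Definition~\ref{def:equiv} is not strictly needed for the cancellation itself. It explains why such a presentation exists, namely that $\rho_x$ and $\rho_a$ are induced by $A_g$ through Eq.~\eqref{eq:transport}.

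The main obstacle is bookkeeping rather than analysis. I must check that $\rho_x(g)$ acts on the \emph{complete} policy-visible input, not only on joint states. That input includes the reordered part features, sign-flipped direction-dependent features, graph edges, and padding masks. If any component were left unconverted, $\rho_x(\hat g)^{-1}x'$ would fail to equal $x_t$ and the cancellation would break. I would handle this by taking $\rho_x(g)$ to be, by definition, the map from the original full input to the rewritten one, and noting that the proposition assumes this map is invertible.

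Some policies carry internal state, such as observation history. For these I would extend the induction hypothesis to say that the whole history seen by the repaired loop is the $\rho_x(g)$-image of the original history. The argument then goes through unchanged.
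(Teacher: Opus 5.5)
Your proposal is correct and follows the paper's proof. Like the paper, you invert the input transport to recover $x_t$, cancel $\rho_a(g)^{-1}\rho_a(g)$ so the executed action is $\pi(x_t)$, and induct on $t$; your added remarks on converting the complete policy-visible input and on extending the argument to input histories for recurrent policies match the consequences the paper draws right after the proposition.
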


\noindent\textit{Proof:} For $\hat{g} = g$, the inverse input transport in~\eqref{eq:repair} recovers $x_t$, the policy therefore produces $\pi(x_t)$, and the output transport followed by the interface's inverse conversion recovers the same physical action. Induction on $t$ gives the original physical trajectory.

Two consequences matter. Repair depends on recovering the correct link-and-joint correspondence from the two files without access to their generator, and the conversion must cover \emph{every} policy input: correcting joint state while leaving description-derived features, edges, or masks in the rewritten order does not reproduce the original behavior. The proposition assumes exact arithmetic and bounds no floating-point error, and Sec.~\ref{sec:experiments} reports the measured residuals. For a policy with memory, it applies to the full input history while the hidden state stays in the original coordinates, which is how we repair the recurrent policy in Sec.~\ref{sec:independent}.

\section{GaugeBench}

GaugeBench applies metamorphic testing~\cite{chen1998metamorphic} to a robot policy in three steps: \textbf{rewrite} the robot under specified alternative conventions, \textbf{verify} that physics, observations, and actions remain equivalent under Definition~\ref{def:equiv}, and \textbf{score} the unchanged policy on the original description, on the rewritten description, and after repair. GaugeBench scores only rewrites that pass verification, and a failed repair stays in the average as a failure rather than being dropped. Verification residuals are reported with the relevant equivalence checks.

Repair is verified at the interface rather than by return. Observations and actions convert to the rewritten interface and back without running the simulator, on five probe states per case. The discrete channels, graph edges, padding masks, and traversals must reconstruct exactly; the continuous proprioceptive and context channels must agree within $10^{-14}$, and actions must round trip exactly. That $10^{-14}$ is a fixed multiple of float64 epsilon taken from the operation count of the transport, not from the residuals it judges. Full-episode return agreement is reported only as a diagnostic. Interface reconstruction determines whether conversion passed, because over a thousand steps a last-digit difference can change a contact event and produce a large return difference.

Three quantities carry the results, and they do not have equal standing. The primary endpoint is the \emph{same-robot loss} in native return units, the original score minus the rewritten score for the same robot and episode. The scale-free readout used in the figures is \emph{do-nothing-adjusted retained performance}. The contextual comparator is the \emph{new-robot loss}, the original score on the 80 evaluated robots minus the score on 98 held-out morphologies from the same procedural generator. Both losses start from the same 80-robot score and are therefore directly comparable.

Retained performance is computed for each robot, rewrite, and episode as the rewritten score above that robot's own do-nothing score, obtained with zero torque in place of the policy, divided by the original score above the same baseline, then averaged over episodes, rewrites, robots, and policies. Standing still is worth between $-20.1$ and $+26.7$ depending on the robot, averaging $-0.5$, so one shared baseline would distort both tails. The fifth percentile and worst case therefore stay in return units, as tail diagnostics rather than endpoints. Held-out bars divide held-out by original means above baseline.

A method scores nothing unless it covers every requested rewrite, and reports ambiguity rather than guessing. Every experiment carries a control supplied with the known correct mapping, states the simulator configuration it ran in, and repairs \emph{two-sided}, receiving both descriptions. Repair is \emph{one-sided} when given only the rewritten one, which remains future work.

% \vspace{-12pt}

\section{Experiments}
\label{sec:experiments}

\noindent\textbf{Statistical design.} The independently trained policy is the top-level replicate, and there are three per system. Robots, rewrites, and episodes replicate \emph{within} a policy: they characterize conditional variation and do not add independent trained policies. Top-level statements therefore carry a Student interval across policy-level values. Where an interval resamples the evaluation hierarchy instead, we say so, and it is conditional on these three policies. The three PyBullet policies are released artifacts differing in capacity rather than training seeds, so we do not read them as seed-level replication.

\subsection{The Same-Robot Paradox on MetaMorph}

We train official MetaMorph on the official UNIMAL-100 flat-terrain locomotion task for the published budget of 99.9M transitions, and require competence before testing any rewritten description: over the 100 training robots and 32 deterministic episodes each, mean per-robot return must reach 1000, the median 500, and at least 80 robots be positive. All three met it. The 98 test robots are disjoint from training but come from the same procedural generator, so they measure ordinary generalization rather than an arbitrary distribution shift. Rewrites are scored on 80 training robots, excluding a fixed 20-robot development set; the evaluated 80 are not easier, averaging 4030.6 versus 4226.6 on the excluded 20. Each rewrite reverses joint axes, changes joint-angle zeros, and reorders links and joints while keeping their names.

All three policies collapse together (Table~\ref{tab:multiseed} and Fig.~\ref{fig:paradox}a). Original return of roughly 4000 falls to roughly 50 under equivalent rewrites. Both tail readouts fall below zero, so some rewritten descriptions leave a gait that is actively counterproductive rather than broken. The primary endpoint is the paired same-robot loss, with a policy-level interval of $[3724.6,\ 4233.5]$, and as a descriptive secondary measure the policies retain 1.44 percent. For 9, 16, and 11 of the 80 robots, the rewritten-description policy performs worse than applying zero torque.

The held-out population sets a scale for that loss. Those 98 robots average 1489.6, so measured from the same original score the new-robot loss is 2541.1, and every policy loses more from rewriting a familiar robot than from meeting a new one. Both losses rest on that shared score, so their difference is descriptive. We attach no interval to the difference, because rewritten return already sits near its lower limit, so the difference mainly reflects how well each policy generalizes.

\begin{table}[t]
\caption{\textbf{MetaMorph under equivalent rewrites.} 3 policies named by
training seed, 80 robots, 10 rewrites, 32 episodes per case.}
\label{tab:multiseed}
\centering
\footnotesize
\setlength{\tabcolsep}{2.4pt}
\begin{tabular}{@{}lrrrr@{}}
\toprule
& \textbf{1409} & \textbf{1410} & \textbf{1411} & \textbf{Mean} \\
\midrule
\multicolumn{5}{@{}l}{\textit{Reference, original descriptions}}\\
\quad 100 training robots & 4048.8 & 4168.0 & 3992.6 & 4069.8 \\
\quad 98 held-out robots & 1291.6 & 1667.7 & 1509.4 & 1489.6 \\
\quad 80 evaluated robots & 4007.9 & 4145.2 & 3938.8 & 4030.6 \\
\midrule
\multicolumn{5}{@{}l}{\textit{Equivalent rewrite, same 80 robots}}\\
\quad Mean return & 51.4 & 54.3 & 49.1 & 51.6 \\
\quad 5th percentile & $-39.1$ & $-31.6$ & $-25.1$ & $-31.9$ \\
\quad Worst case & $-53.7$ & $-42.6$ & $-36.0$ & $-44.1$ \\
\quad With corrected normalization & 54.4 & 65.5 & 60.1 & 60.0 \\
\midrule
\multicolumn{5}{@{}l}{\textit{Comparison, common baseline}}\\
\quad Same-robot loss & 3956.5 & 4090.9 & 3889.8 & 3979.1 \\
\quad Retained (\%) & 1.46 & 1.44 & 1.42 & 1.44 \\
\quad New-robot loss & 2716.3 & 2477.5 & 2429.4 & 2541.1 \\
\quad Excess same-robot loss & $+$1240.2 & $+$1613.4 & $+$1460.4 & $+$1438.0 \\
\bottomrule
\end{tabular}
\end{table}

\begin{figure*}[t]
\centering
\includegraphics[width=0.88\textwidth]{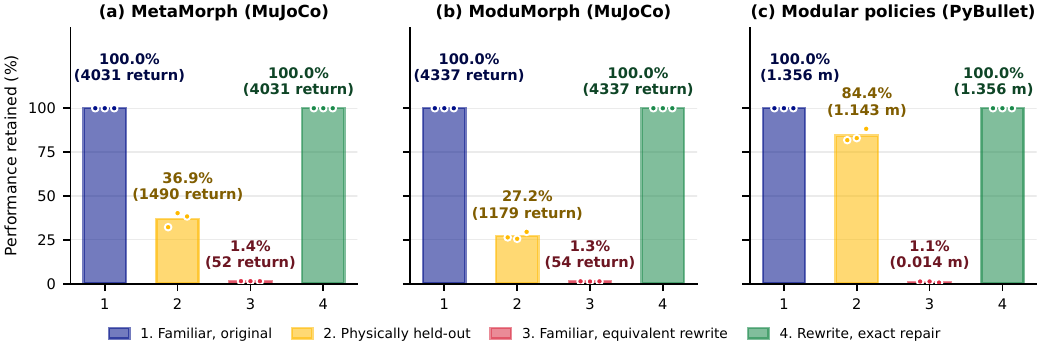}
\caption{\textbf{An equivalent re-description of a familiar robot is more damaging than a physically held-out robot, in all three baseline policy families.} Each panel holds one family's trained policies fixed across the four conditions numbered beneath it: familiar robots under their original descriptions, physically held-out robots, familiar robots under equivalent rewrites, and those rewrites after repair. Only the second changes the mechanism. Bars are performance retained above a do-nothing score, zero for panel (c), averaged over policies, with the absolute score beneath, and points are the individual policies. Transport restores 100.0\% aggregate retained performance, matching the original episode for episode except in five of 25{,}600 paired episodes in one ModuMorph policy. Panel (c) rewrites axis directions alone, because reordering is not certified physically equivalent in that simulator, and its three policies are released artifacts differing in capacity rather than training seeds.}
\label{fig:paradox}
\end{figure*}

Two controls narrow the explanation. Correcting the stored mean and variance used to scale observations raises return only from 51.6 to 60.0, closing 0.2 percent of the 98.7 percent gap, so stale normalizer statistics are not the cause. Supplying the correct link-and-joint correspondence instead makes the unchanged weights reproduce the original return exactly in all 2{,}400 cases, as does a correspondence recovered from the description bytes alone. The weights keep their competence, and what changed is how the interface presents inputs and reads actions.

\subsection{Which convention breaks the policy}
\label{sec:decomposition}

The main experiment changes three conventions at once, so it cannot say which one matters. We therefore vary each convention separately using the same three policies and 80 robots. For each of the three conventions, we evaluate five rewrites per robot and 32 episodes per case, giving 1,200 unique robot-rewrite pairs, each evaluated by the same 3 policies. All pairs remain physically equivalent, with a largest verification error of $1.86\times10^{-10}$, and none of the changes is cosmetic. Each actuated joint flips independently with probability one half, realizing 46.5 percent flipped over 5 to 15 joints; zero offsets are sampled uniformly from $\pm0.5$~rad and realize $0.259$~rad mean absolute; and the permutation leaves 14.7 percent of movable tokens in place. Percentages are computed within each policy and then averaged. Unless stated otherwise, intervals resample policies, robots, rewrites, and episodes hierarchically and are therefore conditional on these three trained policies.

Reversing joint-axis directions alone is sufficient to reproduce the collapse (Table~\ref{tab:decomposition}). It retains 3.6 percent against 1.9 percent when all three conventions change, reaching approximately the same floor as the combined rewrite. The combined column uses the five rewrite seeds drawn for this comparison, which is why it reads 1.9 percent rather than the 1.44 percent in Table~\ref{tab:multiseed}, which averages over all ten primary rewrites. For every policy, the axis-reversal loss exceeds the loss on new robots: the excess is $[1022,1586]$ under hierarchical resampling and $[825,1783]$ under a Student interval across policies. Changing joint-angle zeros costs only 91.5 return, or 2.3 percent, with intervals $[62,121]$ and $[44,139]$ that exclude zero, so the effect is small but measurable. Link-and-joint order is intermediate at 35.8 percent; its loss differs from the new-robot loss by only $+4.5$, with a Student interval of $[-428,437]$, so we neither claim nor rule out that reordering alone establishes the paradox, though its upper bound remains below the lower bound for axis reversal. In the final block of Table~\ref{tab:decomposition}, we report \emph{excess same-robot loss}, defined as same-robot loss minus new-robot loss, so positive values mean that rewriting a familiar robot is more damaging than encountering a genuinely new one.

\begin{table}[t]
\caption{\textbf{Effect of individual description conventions.}}
\label{tab:decomposition}
\centering
\scriptsize
\setlength{\tabcolsep}{2.0pt}
\begin{tabular}{@{}lrrrr@{}}
\toprule
& \textbf{\shortstack{Joint-axis\\direction}} & \textbf{\shortstack{Joint-angle\\zero}} & \textbf{\shortstack{Link-and-\\joint order}} & \textbf{Combined} \\
\midrule
Rewritten return    & 146.6 & 3939.2 & 1445.9 & 78.0 \\
Retained (\%)       & 3.6 & 97.7 & 35.8 & 1.9 \\
5th percentile      & $-17.8$ & 3773.3 & 612.7 & $-19.3$ \\
Worst case          & $-26.3$ & 3748.1 & 528.7 & $-25.2$ \\
\midrule
Same-robot loss     & 3884.1 & 91.5 & 2584.7 & 3952.6 \\
\quad 95\% interval & $[3726,4038]$ & $[62,121]$ & $[2434,2733]$ & $[3793,4110]$ \\
\midrule
\multicolumn{5}{@{}l}{\textit{Excess same-robot loss, same-robot minus
  new-robot}}\\
\quad Estimate      & $+1303.8$ & $-2488.8$ & $+4.5$ & $+1372.4$ \\
\quad 95\% interval & $[1022,1586]$ & $[-2736,-2253]$ & $[-274,280]$ & $[1087,1657]$ \\
\bottomrule
\end{tabular}
\end{table}

These outcomes fit the architectures, which use learned position embeddings and carry no invariance to reversing joint coordinates. The convention-wise effects are not additive: axis reversal and the combined change both reach the performance floor, so their losses cannot tell us what fraction of the combined failure is attributable to axis reversal, and interactions between conventions prevent interpreting the one-at-a-time effects as components of a sum. Fig.~\ref{fig:distribution} further shows that the collapse is not driven by a small number of unusually fragile robots.

\subsection{Which part of the interface reproduces the failure}

Which channel is sufficient to reproduce the collapse? Reversing a joint axis changes two parts of the policy interface at the same time: the joint limits and axis features derived from the description, and the coordinate system in which joint states are observed and torques are commanded. We drive each channel on its own. Neither partial change is internally consistent, and neither describes a physically valid robot, so these are diagnostic interventions, outside GaugeBench, and are not scored as equivalent-description tests.

The channels separate cleanly. Changing only joint-state and torque coordinates retains 2.5 percent, reproducing the collapse through that channel alone. Changing only the description-derived features retains 37.6 percent, and every policy keeps walking, with a fifth-percentile return of $+547.3$ against $-38.4$ when only state and torque coordinates change. Three losses then fall within 3 percent of each other: 2541.1 on held-out robots, 2584.7 after reordering, and 2517.6 after changing only the axis-related description features.

Changing the coordinate interface alone is therefore sufficient to reproduce the catastrophic collapse, and changing the description-derived features alone is not, costing instead about what ordinary generalization to a new robot costs. The extreme sensitivity implicates the coordinate interface through which the policy observes and acts on the mechanism rather than a failure to read morphology features. Because the coordinate-only and full axis-reversal conditions both lie near the floor, and the interventions are not additive, their magnitudes do not apportion the combined loss between the channels, and we do not interpret the ordering of 3930.6 against 3884.1 for the complete axis reversal.

\begin{figure*}[t]
\centering
\includegraphics[width=0.85\textwidth]{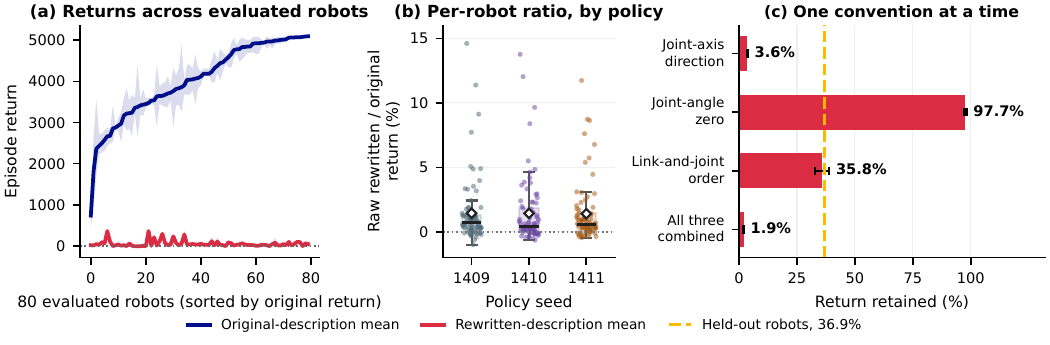}
\caption{\textbf{The collapse is uniform across robots and policies, and one convention alone reproduces it.} (a) Per-robot return on the 80 evaluated robots, sorted by original performance, with bands spanning the three policies. Rewritten return stays near zero across the entire range, so the mean is not carried by a few fragile robots. (b) Per-robot raw ratio of rewritten to original return, one column per trained policy. All 240 robot--policy pairs fall below 15 percent, and the axis is not truncated. (c) Do-nothing-adjusted return retained per convention, against the held-out reference. Joint-axis direction alone reaches the floor the combined rewrite reaches, while the joint-angle zero is nearly harmless. Error bars are the between-policy standard deviation, and these effects are not additive.}
\label{fig:distribution}
\end{figure*}

For axis reversal and reordering, the mapping recovered from the descriptions is bitwise identical to the supplied one. Across all 1{,}200 cases, every discrete value reconstructs exactly; the largest continuous error is $6.66\times10^{-16}$ against the $10^{-14}$ bound, and actions round trip exactly.

\subsection{The same failure in two other systems}

\noindent\textbf{ModuMorph.} To test whether this is specific to one morphology encoder, we trained official ModuMorph at the same policy seeds, budget, task, and observation, action and normalizer contract, on the same morphologies, and re-described the same 80. All three policies are competent under the criterion above, and the design is 3 policies, 80 morphologies, 10 transformations, and 32 episodes per case (Fig.~\ref{fig:paradox}b). Original return is 4336.5 against 54.1 after rewriting, a mean same-robot loss of 4282.5, with per-policy values of 4249.0, 4305.8, and 4292.6 and a policy-level interval of $[4208.6, 4356.3]$. The policies retain 1.32 percent. For 7, 9, and 11 of the 80 robots, they do worse than applying zero torque, and every policy again loses more to a rewritten familiar robot than to a held-out one. Both known and structure-recovered mappings restore all three. Agreement under the known mapping is a diagnostic: two policies reproduce all 25{,}600 paired episodes exactly and the third reproduces 25{,}595, the five exceptions differing by 7.4 return after a last-digit rounding difference reaches a contact event. The two official configurations differ in several ways at once, so this shows the failure is not confined to one robot encoder, without isolating which difference matters.

\noindent\textbf{Independent-system replication.}
\label{sec:independent}
MetaMorph and ModuMorph share an ecosystem, so we ran GaugeBench against released policies from an unrelated line of work on modular robot control~\cite{whitman2023modular}. The experimental stack is independent of the evidence above: PyBullet~\cite{coumans2021pybullet} rather than MuJoCo~\cite{todorov2012mujoco}, hardware modules rather than procedural bodies, model-based RL rather than PPO, a per-module recurrent graph network rather than a Transformer, and commanded body-velocity tracking rather than forward locomotion reward. Twelve trained designs and 10 transformations give 120 design--rewrite pairs, each evaluated by 3 released policies over 8 episodes, so 360 policy-evaluation cases. Twelve held-out designs are drawn deterministically from the remaining compiled designs.

Here we reverse joint-axis directions only, using the same independent coin flip per joint as in the MetaMorph experiment. Serialization order is excluded rather than reported uncertified, because this simulator orders contact pairs by link index under \texttt{URDF\_USE\_SELF\_COLLISION}, so reordering could change the physics rather than preserve it. Reversing an axis and mirroring its joint limits describes the same physical hinge using the opposite positive direction, and under the framework's own simulator settings all 120 pairs certify, the largest base-state error and the largest converted joint-state error both being exactly $0.0$.

The paradox reproduces (Fig.~\ref{fig:paradox}c). Progress along the commanded direction averages $1.356$~m with original descriptions and $1.143$~m on a new design, but $0.014$~m, or 1.05 percent, after rewriting a familiar one. The same-robot loss of $1.342$~m is 6.3 times the new-robot loss of $0.213$~m. Per policy, the difference is $+1.12$, $+1.13$ and $+1.14$~m, with resampling over designs, rewrites and episodes giving $[0.986, 1.272]$~m conditional on these released policies. New designs cost these policies far less than they cost MetaMorph, yet rewritten descriptions reduce both systems to about one percent, and 157 of the 360 policy-evaluation cases move opposite the commanded direction. Both mapping-based repairs are exact. Matching by part name cannot repair axis reversal, because names do not record axis direction, a limit of that matching method rather than of the repair problem.

This framework also isolates the coordinate channel by construction, since axis reversal changes joint states and actions while module types and the attachment graph stay identical across all 120 pairs. A description-blind controller from the same release, a shared network selecting layers by design index, behaves the same way, travelling $0.359$~m originally and $-0.007$~m after axis reversal, though its original score is a quarter of the modular policies' and the comparison is directional only.

\subsection{Training across the axis convention family}
\label{sec:defense}

A direct mitigation is to stop treating one axis convention as privileged. We therefore train three additional MetaMorph policies matched one-to-one with the three baselines by training seed, using the same robots, budget, architecture, optimizer, task, and morphology set. The only change is exposure to axis conventions: at the start of each training episode, with probability one half, the robot is rewritten under a randomly drawn equivalent axis convention from the same family used by GaugeBench. Evaluation draws are independent of those seen during training, and all three policies satisfy the same competence criterion as the baselines. Table~\ref{tab:defense} reports means across the three matched policy pairs; the Change column is the paired axis-randomized minus standard difference. Randomized augmentation itself is standard machinery. The question here is whether exposure to equivalent conventions is enough to remove the representation fragility revealed above.

The effect is large. Under axis reversal, the randomized policies retain 80.6 percent of their original-description return, compared with only 3.6 percent for the standard policies. The three matched policies retain 73.7, 86.9, and 81.3 percent, and a Student interval on the matched-seed improvement is $[60.6,93.3]$ percentage points. The lower tail improves with the mean: the worst robot returns 2519.4 rather than $-26.3$, leaving no evaluated robot near the do-nothing floor. For axis reversal, the Same-Robot Paradox therefore reverses: these policies lose less from rewriting a familiar robot than from transferring to the held-out population.

The mitigation is neither free nor complete. Original-description return falls from 4030.6 to 3606.3, a decrease of 424 on average, or about 10 percent, with a matched-seed interval of $[-616,-232]$. Held-out performance is essentially unchanged at the resolution of three policies, moving from 1489.6 to 1491.2. Robustness also remains incomplete outside the convention family seen during training: when axis reversal is combined with the two untreated convention changes, retained return rises from 1.9 to only 30.9 percent. This is consistent with substantial residual sensitivity to joint-angle-zero and ordering conventions.

The gain is not simply memorization of evaluation-time axis patterns. Training samples from the convention family rather than enumerating it: for the median evaluated robot, the probability that the exact axis pattern used at evaluation ever appeared during training is only 0.27. The recovered performance therefore reflects generalization across equivalent axis conventions rather than recall of specific descriptions.

\begin{table}[t]
\caption{\textbf{Axis-convention randomization.}}
\label{tab:defense}
\centering
\footnotesize
\setlength{\tabcolsep}{3.0pt}
\begin{tabular}{@{}lrrr@{}}
\toprule
& \textbf{Standard} & \textbf{\shortstack{Axis-\\randomized}} & \textbf{Change} \\
\midrule
Original-description return & 4030.6 & 3606.3 & $-424$ \\
Axis-reversed return & 146.6 & 2911.8 & $+2765$ \\
\textbf{Return retained} & \textbf{3.6\%} & \textbf{80.6\%} & $+77.0$ pts \\
Combined rewrite & 1.9\% & 30.9\% & $+28.9$ pts \\
Held-out return & 1489.6 & 1491.2 & $+2$ \\
\bottomrule
\end{tabular}
\end{table}

The two responses address different failure modes. Exact transport diagnoses representation misalignment when both descriptions are available and demonstrates that the frozen policy remains competent. Axis randomization instead asks whether robustness to an entire convention family can be acquired during training; once trained this way, the policy no longer requires the original description at evaluation time for the convention family it has learned to tolerate.

\subsection{Recovering the correspondence without shared names}

Repair so far assumed shared identifiers, where name matching suffices, recovering $800/800$ with maximum zero-offset error $2.2\times10^{-16}$~rad. Descriptions from sources that never agreed on a namespace are the harder case, so we repeat the three convention changes on the same 80 robots while replacing every link and body name with a unique arbitrary one. The design is 3 policies, 80 robots, 5 rewrites, and 32 episodes, and all 400 robot--rewrite pairs pass the equivalence check, with a largest physics error of $1.4\times10^{-10}$. Mean original return is 4020.2 and falls to 34.1, a same-robot loss of 3986.1, which we do not read as worse than the 3979.1 of the stable-identifier family, since this family draws different rewrite seeds and different evaluation episode seeds on the same three trained policies. Every policy orders the two penalties the same way, so the paradox survives the harder family.

Renaming removes the shortcut but not the repair (Fig.~\ref{fig:gauntlet}). Name matching recovers $0/400$ and correctly declines to guess. Structure matching recovers $400/400$ and restores the original return exactly, against 0.9 percent with no repair. It matches parts using the structure of the two descriptions, their rooted kinematic trees and the physical attributes our rewrites leave unchanged, which is canonical labeling of attributed rooted trees~\cite{aho1974design}, and reports ambiguity where identical branches admit several mappings. Every evaluated robot is structurally asymmetric, so no such case arose, and refusal is exercised only by a unit test on a deliberately symmetric description, so repair is not validated under ambiguous symmetry. Other matching methods could do the same, and we make no claim that it is necessary.

\begin{figure}[t]
\centering
\includegraphics[width=0.92\columnwidth]{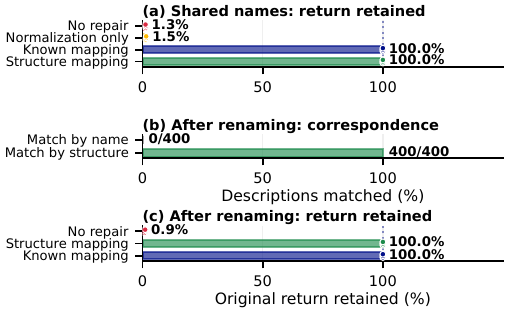}
\caption{\textbf{Renaming defeats the trivial fix, but not repair itself.} (a) With shared part names, a known correspondence and one recovered from description structure both restore the original return exactly, while correcting observation normalization alone does almost nothing. (b) With every name replaced, name matching recovers nothing and correctly declines to guess, while structure matching recovers all of them, and (c) repair there restores the original return exactly. Bars are means over the three policies, points the individual policies. Every repair is \emph{two-sided}, receiving both descriptions and retraining nothing.}
\label{fig:gauntlet}
\end{figure}

\section{Discussion and Limitations}

Mechanism robustness, $M_{\mathrm{train}} \rightarrow M_{\mathrm{new}}$, does not imply representation robustness, $D(M) \rightarrow D'(M)$. Every policy in the three primary comparison families transfers to a genuinely new machine better than it survives an equivalent rewrite of one it already knows. A policy that changes behavior under a physically equivalent description of a familiar robot is responding to more than the robot it controls.

A description gives a policy structural features describing the robot and the coordinate conventions in which joint states and actions are expressed. For MetaMorph, where we drove the two separately, changing the first alone costs about what ordinary transfer to a new robot costs, while changing the second alone reproduces the collapse. It is the second that makes an equivalent rewrite of a familiar robot worse than an unfamiliar one. Exact transport shows the weights keep their competence throughout, and training exposure shows part of this robustness can be learned, without solving the general case.

Several limits bound the claims. The evidence is simulation only, covers the evaluated morphology-aware policy families rather than every current method, and rests on three independently trained policies per MuJoCo system, which is the sample size behind every top-level interval. Simulation is also what makes the comparison tight: the description changes while the mechanism is held exactly fixed, where hardware would add calibration, wear, and sensing differences a description-only test cannot absorb. Repair is two-sided throughout, so recovery from the rewritten description alone remains open, as does policy-level repair where identical branches make the correspondence ambiguous. The training-time mitigation randomizes one convention family, leaves the other two untreated, and is not a claim of invariance. We do not estimate how often the evaluated convention changes occur in deployed robot-description pipelines, and GaugeBench measures their consequences when an equivalent description is used rather than present-day prevalence.

Morphology-aware cross-robot policies, whose interfaces expose joints and parts explicitly, should therefore be evaluated not only on new mechanisms but also on equivalent descriptions of mechanisms they already know, with physical equivalence certified before anything is scored. Whether policies that carry the robot interface implicitly behave the same way is open, and the axis is well defined for them too.

\bibliographystyle{IEEEtran}
\bibliography{references}

\end{document}